\documentclass[11pt]{article}

\usepackage[final]{acl}

\usepackage{times}
\usepackage{latexsym}
\usepackage{microtype}
\usepackage[T1]{fontenc}

\usepackage[utf8]{inputenc}

\usepackage{microtype}

\usepackage{inconsolata}

\usepackage{graphicx}
\usepackage{subcaption}
\usepackage[most]{tcolorbox}
\usepackage{booktabs}  

\title{Fathom-Vaidya: Advancing Medical Reasoning with Rubric-Based Rewards}

\author{
  Kalash Shah\textsuperscript{*} \\ \texttt{kalash.shah@fractal.ai}
  \And
  Kunal Singh\textsuperscript{*} \\ \texttt{kunal.singh@fractal.ai}
  \And
  Snehan J \\ \texttt{snehan.j@fractal.ai}
  \AND
  Shreyas Singh \\ \texttt{shreyas.singh@fractal.ai}
  \\[4pt]
  Fractal AI Research
}

\usepackage{amsmath, amsthm}

\newtheorem{proposition}{Proposition}
\usepackage{amssymb}
\usepackage{threeparttable}
\usepackage[svgnames, table, dvipsnames]{xcolor}
\usepackage{float}
\usepackage{arydshln}
\usepackage[hypcap=false]{caption}
\newcommand{\synhb}{Vaidya-Rubrics }

\begin{document}
\maketitle
{\renewcommand{\thefootnote}{}%
\footnotetext{\textsuperscript{*}\,Equal contribution}}
\begin{abstract}
Deploying Large Language Models (LLMs) in healthcare requires robust performance across two complementary dimensions - \textbf{diagnostic reasoning}: the convergent, evidence-driven task of inferring a patient's condition from clinical data to produce a diagnosis, and \textbf{clinical healthcare reasoning}: the broader, navigational judgment required to communicate, plan, and adapt across multi-turn clinical interactions where a single correct answer may not exist. Recent benchmarks such as HealthBench and MedXpertQA reveal persistent weaknesses in both areas, exposing failures in complex diagnostic scenarios and limitations in contextual, patient-centered dialogue. We introduce a sequential training framework that targets these facets using synthetic data and rubric-based reinforcement learning. First, we improve diagnostic reasoning using MedBullets-derived questions with rule- and rubric-guided Reinforcement Learning (RL). We then shift to clinical reasoning by generating 5.3k synthetic multi-turn scenarios, each paired with multi-dimensional rubrics to comprehensively assess the response. This approach yields over 10\% improvement on MedXpertQA, and our 30B model achieves 50.1\% accuracy on HealthBench-Hard, surpassing proprietary baselines including GPT-5 (thinking). Our results show that targeted synthetic datasets and rubric-based training can systematically improve both diagnostic and interactive clinical reasoning in medical LLMs.

\end{abstract}
\section{Introduction}
The growing deployment of LLMs in healthcare places stringent demands on their ability to reason reliably in real-world clinical settings, beyond performance on exam-style medical benchmarks. While prior work \cite{2025II-Medical-8B},~\cite{chen2024huatuogptiionestagetrainingmedical},~\cite{alhamed}) has largely focused on reinforcement learning (RL) with verifiable rewards for well-posed, single-answer examination questions, authentic clinical interactions involve multiple, interdependent dimensions such as accuracy, completeness, contextual awareness, and communication quality, rendering binary evaluation signals insufficient \cite{shool2025systematic}. Recent challenging benchmarks, including HealthBench-Hard \cite{arora2025healthbenchevaluatinglargelanguage} and Healthbench-Professional \cite{hicks2026healthbench}, highlight substantial gaps in current LLMs’ clinical reasoning abilities, underscoring the need for training and evaluation paradigms that better reflect the complexity of clinical decision-making.

Recent works have explored rubric-based RL using subjective evaluation criteria. Baichuan-M2 \cite{m2team2025baichuanm2scalingmedicalcapability} constructs synthetic clinical conversations from de-identified, publicly available data and employs a clinical evaluator to generate corresponding rubrics, which are then used to train a Qwen3-32B model via RL. Their study primarily targets diagnostic scenarios, with limited coverage of themes such as emergency referrals, global health, etc and the resulting rubrics emphasize accuracy and communication quality while largely omitting dimensions such as context-seeking, completeness, and instruction following. II-Medical-8B \cite{2025II-Medical-8B} applies RL to hard medical questions but does not incorporate multi-dimensional rubrics for evaluation. Currently, no existing dataset reflects the distribution of real-world, day-to-day clinical conversations between users and language models, apart from Healthbench. Very few studies have systematically used multi-dimensional rubrics to guide reinforcement learning for medical reasoning tasks, as most approaches train models to optimize only for the correctness of the final answer rather than the quality of the underlying reasoning process.

To address this challenge, we propose a sequential-training framework for improving diagnostic reasoning and clinical healthcare reasoning using synthetic data curation and multi-dimensional rubric generation that provides dense and informative training signals for RL in medical settings. 

We introduce a novel training approach for diagnostic reasoning. In this setting, we first employ a simple rule-based reinforcement learning scheme tailored to structured, single-answer medical questions. We curate a training corpus by extracting high-quality questions from Medbullets \cite{medbullets}, an online medical learning platform. Subsequently, we perform a second stage of online RL using error-rubrics to further refine the reasoning quality. This two-stage training process results in improved performance on across various diagnostic reasoning benchmarks, suggesting that RL can enhance clinical medical reasoning without relying on supervised Chain-of-Thought data.

Next, we extend our diagnostically-enhanced model to clinical healthcare reasoning. Unlike contemporary works~\cite{ye2025selfrewardingrubricbasedreinforcementlearning} that train directly on HealthBench-Easy, we also leverage MedRedQA~\cite{nguyen-etal-2023-medredqa}, a dataset grounded in real-world patient queries. We generate multi-turn synthetic conversations that simulate diverse user personas, which are iteratively refined via an actor–critic feedback loop to ensure completeness and consistency. Using a similar actor–critic process, we generate multi-dimensional evaluation rubrics aligned with these conversations, resulting in a synthetic dataset of 5,351 datapoints. We also train Qwen3-4B, Qwen3-8B, and Qwen3-30B-A3B models on this dataset, and refer to the resulting fine-tuned models as the Fathom-Vaidya (FV) family. Across all model scales, FV achieves consistent improvements exceeding 20\% on HealthBench-Hard~\cite{arora2025healthbenchevaluatinglargelanguage} and 12-15\% on Healthbench-Professional ~\cite{hicks2026healthbench} (a different benchmark to assess how helpful are LLMs in assisting doctors in their day-to-day tasks). FV-30B-A3B achieves a global best of \textbf{50.1\%} on Healthbench-hard. On Healthbench-professional, FV-30B-A3B approaches the physician baseline and surpasses Grok 4.2.


Our main contribution are as follows:
\begin{enumerate}
    \item We present a novel actor-critic framework for synthesizing realistic clinical healthcare conversations at scale. In a randomized clinical verification study, 99.4\% of the generated dialogues were judged to be medically valid, demonstrating the reliability of our generation pipeline.

    \item We propose an ICL driven actor-critic method for automatically producing multi-dimensional evaluation rubrics for clinical conversations, leading to the creation of the \synhb dataset. Training with \synhb enables rubric-guided RL that improves medical reasoning quality across model sizes.

    \item We introduce a training pipeline for challenging diagnostic reasoning, leveraging high-quality questions extracted from MedBullets. The approach combines an initial stage of rule-based RL with a subsequent rubric-based RL, and yields an improvement of approximately 11\% over Qwen3-8B without relying on supervised Chain-of-Thought data.

\end{enumerate}
\section{Preliminaries}
\subsection{Healthbench}
Healthbench~\cite{arora2025healthbenchevaluatinglargelanguage} released by OpenAI in May 2025, consists of 5,000 multi-turn conversations between a model
and majorly individual users or healthcare staff in some cases. The objective of this benchmark is to assess the final response generated by the model concluding the conversation. Each response is assessed (by an llm-as-a-judge) according to rubrics designed by medical experts, comprising subjective criteria and associated scores deemed appropriate by those experts. The entire dataset is split into two-subsets: healthbench-easy (4000 samples) and healthbench-hard (1000 samples), the latter consisting of the 1000 samples on which various off-the-shelf LLMs scored the least.
\subsection{MedXpertQA}
MedXpertQA~\cite{zuo2025medxpertqabenchmarkingexpertlevelmedical} is a recently released, challenging medical question answering benchmark. In this work, we focus exclusively on the MedXpertQA-Text subset, which contains hard, exam-level multiple-choice questions curated from authoritative medical examinations and specialty board questions across a diverse range of clinical domains. This subset contains 2,450 questions covering clinical subtasks like diagnosis, treatment planning, and basic medical knowledge. The questions also cover 11 body systems including cardiovascular, respiratory, nervous, digestive, and endocrine systems. The questions test clinical reasoning by requiring models to combine patient symptoms, clinical findings, and test results, rather than relying on isolated medical facts. 
\subsection{Group Sequence Policy Optimisation (GSPO)}
Group Sequence Policy Optimization (GSPO)~\cite{zheng2025groupsequencepolicyoptimization} optimizes policies using
sequence-level rewards by comparing multiple sampled trajectories within a
group. For a given context, we sample a group of $K$ sequences from a
reference policy and compute a normalized advantage for each sequence based
on its reward relative to the group. Policy updates are performed at the
sequence level using a clipped importance-weighted objective, which
encourages higher probability for sequences with positive relative advantage
while limiting large deviations from the reference policy. The GSPO loss is
defined in Equation~\ref{eq:gspo_loss}:


\begin{equation}
\label{eq:gspo_loss}
\begin{aligned}
\mathcal{L}(\theta)
&=
\mathbb{E}_{\mathcal{G}}
\Bigg[
\frac{1}{K}
\sum_{k=1}^{K}
\min \Bigg(
\\
&\quad
r_\theta(\tau^{(k)}) \hat{A}^{(k)},
\\
&\quad
\mathrm{clip}\!\left(
r_\theta(\tau^{(k)}),\, 1-\epsilon,\, 1+\epsilon
\right)
\hat{A}^{(k)}
\Bigg)
\Bigg].
\end{aligned}
\end{equation}

Here, $\mathcal{G}$ denotes a group of $K$ trajectories
sampled from the reference policy $\pi_{\theta_{\text{old}}}$ under the same
context. Each trajectory $\tau^{(k)}$ corresponds to a complete action
sequence. The term $\hat{A}^{(k)}$ is the group-normalized advantage of
$\tau^{(k)}$, computed from its scalar sequence-level reward by subtracting
the group mean and dividing by the group standard deviation. The importance
ratio $r_\theta(\tau^{(k)})$ measures the relative likelihood of the sequence
under the current policy $\pi_\theta$ compared to the reference policy, and
the clipping parameter $\epsilon$ controls the maximum allowed policy update.
Expectations are taken over groups sampled during training.
\section{\synhb Dataset}
\label{syn_dataset}
In order to construct a synthetic dataset containing multi-turn clinical conversations whose final responses are non-verifiable, we develop an actor-critic feedback loop to generate the theme-based conversations (Section~\ref{theme_convos}) and come up with their corresponding rubrics (Section~\ref{rubrics}). We refer to the actor LLM as $\mathcal{M}_{actor}$ and critic LLM as $\mathcal{M}_{critic}$. For our experiments, we use \textit{gpt-5-thinking} as $\mathcal{M}_{actor}$ and \textit{Grok 3} as $\mathcal{M}_{critic}$.
\subsection{Theme-based Conversations}
\label{theme_convos}
We adopt two different strategies to generate the raw conversations depending on the themes:
\begin{enumerate}
    \item For conversations belonging to the themes: responding-under-uncertainty and context-seeking, we source scenarios from the MedRedQA dataset \cite{nguyen-etal-2023-medredqa}. MedRedQA consists of medical queries posted by users on the \texttt{r/askdocs} subreddit, paired with responses that have undergone clinical verification. Conditioned on the target theme, $\mathcal{M}_{critic}$ analyzes the corresponding MedRedQA case and determines the number of dialogue turns as well as the intended structure of each turn. This specification is then provided to $\mathcal{M}_{actor}$, which generates the initial version of the conversation.
    
    \item For conversation themes such as global health, health-data interpretation, emergency referrals, and expertise-tailored communication, we instruct $\mathcal{M}_{actor}$ to generate raw conversations using In-Context Learning (ICL) examples drawn from HealthBench-Easy.

\end{enumerate}

We simulate both single-turn and multi-turn interactions between the model and the user. To generate diverse and realistic patient queries, we prompt $\mathcal{M}_{actor}$ to exhibit four distinct personality profiles, defined by the axes of introversion–extroversion and logical–emotional reasoning. The raw conversations are analysed by $\mathcal{M}_{critic}$ for (a) completeness: ensuring all the messages are complete and (b) consistency: ensuring the conversation is consistent in accordance with the theme. The actor-critic feedback loop is executed maximum 3 times until the conversation is deemed complete and consistent by $\mathcal{M}_{critic}$. All the prompts are described in Appendix~\ref{appendix:syn_conv_prompts}. 

We synthesize a total of 5351 datapoints wherein the distribution across different themes is shown in Figure~\ref{fig:synhb_theme_dist}.
\begin{figure*}[t]
    \centering
    \includegraphics[width=0.65\textwidth]{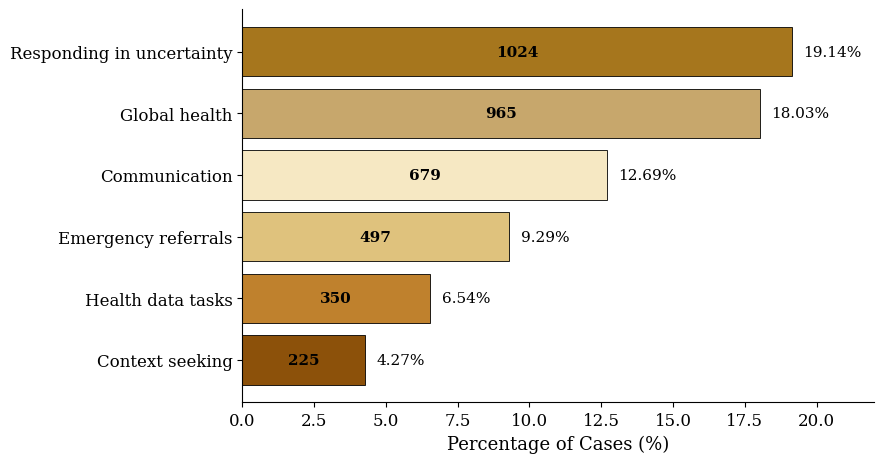}
    \caption{Distribution of cases across themes in \synhb}
    \label{fig:synhb_theme_dist}
\end{figure*}

\subsection{Rubrics}
\label{rubrics}
We construct an actor-critic feedback loop to generate evaluation rubrics for the conversations described in Section ~\ref{theme_convos}. Each rubric consists of a criterion and a continuous score in $[-10, +10]$, generated per theme--axis pair to ensure fine-grained, conversation-specific evaluation.

A key challenge in synthetic rubric generation is maintaining scoring consistency and clinical fidelity without human annotation at scale. Previous studies ~\cite{chen2021evaluatinglargelanguagemodels}, ~\cite{cobbe2021trainingverifierssolvemath} have shown that using ICL examples as anchors can significantly improve the performance of LLMs and their ability to generate coherent synthetic data. To address this, we ground the rubric generation process using a small set of examples from HealthBench-Easy, as \textit{calibration anchors} that expose $\mathcal{M}_{\text{actor}}$ to the scoring norms and clinical judgment standards validated by physicians. The examples are picked to ensure thematic overlap so that the calibration signal is domain-relevant.

Crucially, $\mathcal{M}_{\text{actor}}$ is explicitly instructed to generate rubrics that are \textbf{specific to the given conversation and theme--axis pair}, rather than to reproduce or paraphrase the retrieved examples. The retrieved rubrics serve solely to communicate the expected quality bar---analogous to how a few expert-annotated examples in any annotation pipeline help calibrate rater judgment without dictating rater output. A maximum of 5 rubrics are permitted per axis to prevent redundancy and encourage criterion diversity.
The generated rubrics are then evaluated by the critic model, $\mathcal{M}_{critic}$, which performs the following checks:
\begin{enumerate}
\item \textbf{Quality check}: Each rubric criterion must be complete, well-defined, and semantically distinct from the others.
\item \textbf{Score sanity check}: The score assigned to each rubric must be logically consistent with the rubric criterion.
\end{enumerate}

If either validation check fails, the critic provides structured feedback, and the resulting output is returned to $\mathcal{M}_{actor}$ for refinement. This actor–critic loop is executed for at most three iterations, or until both checks are satisfied, whichever occurs first. Rubrics that pass both checks are accepted and incorporated into the final synthetic dataset. For each conversation, rubrics across all evaluation axes are aggregated to form a complete datapoint. Rubrics for 288 datapoints (5.38\%) were modified by the critic in this process. The prompts are detailed in Appendix~\ref{appendix:rubrics_gen_prompts}.



    

    




Rubrics may exhibit either positive or negative polarity. Rubrics with negative polarity specify criteria corresponding to behaviors that the model should avoid. For instance, in the context of a heart attack, a positively polarized rubric may state, “Advises the patient to call an ambulance immediately,” whereas a negatively polarized rubric may specify, “Fails to advise the patient to call an ambulance immediately.” Rubrics with negative polarity are assigned negative scores. We posit that conversations containing a higher proportion of negatively polarized rubrics tend to receive lower overall scores; a detailed analysis is provided in Appendix~\ref{appendix:neg_rubrics}. To further increase the difficulty of the synthetic dataset, we randomly invert the polarity of a positive rubric with a probability of 10\%. Upon inversion, the rubric criterion is replaced by its negation, and its score is negated relative to the original value.

The synthetic dataset was subjected to a randomised clinical verification by a team of medical professionals and the evaluation is described in Appendix~\ref{appendix:clinical_eval}.

\section{Methodology}

\subsection{RL Training for Diagnostic Reasoning}

\subsubsection{Training Dataset}
\label{medbullets}


Diagnostic reasoning in medicine involves complex, multi-step decision making grounded in domain knowledge. Accordingly, we seek training data that reflects this level of difficulty and underlying reasoning structure. Previous work has trained medical reasoning models using established QA datasets, including exam-oriented benchmarks such as MedQA and MedMCQA~\cite{pmlr-v174-pal22a}, as well as biomedical QA datasets such as PubMedQA~\cite{jin-etal-2019-pubmedqa}. These datasets underpin systems such as Huatuo-o1~\cite{chen2024huatuogpto1medicalcomplexreasoning}, MedReason~\cite{wu2025medreasonelicitingfactualmedical}, and m1~\cite{huang2025m1unleashpotentialtesttime}.

Prior work by ~\cite{chen-etal-2025-benchmarking} curated USMLE Step 2/3--style questions from MedBullets, a widely used USMLE preparation platform, showing that such questions are inherently challenging and capture the depth of reasoning required for diagnosis. Building on this line of work, we incorporate a substantially larger corpus of MedBullets questions, which are openly available from the MedBullets website. Using a web crawler, we collect $7{,}801$ text-based questions spanning USMLE Step~1, USMLE Step~2, and Orthopedics question sets. Each question includes five answer options along with detailed explanations covering both correct and incorrect choices. The resulting dataset spans a broad range of domains, with approximately 35\% of questions covering foundational basic sciences, 30\% focusing on organ system--based medicine, 20\% on pathology and disease mechanisms, and the remainder distributed across reproductive medicine (7\%), psychiatry and behavioral science (5\%), and general topics (3\%). Its recency further makes it more challenging than earlier benchmarks such as MedQA~\cite{jin2020diseasedoespatienthave}. We use this MedBullets dataset alongside MedQA, MedMCQA to construct our overall training corpus. Overall, this dataset provides broad coverage and complexity aligned with the level of reasoning required for diagnosis.

\subsubsection{Rule-Based RL}
\label{train_strategy}

We first train the model using rule-based RL, without relying on supervised Chain-of-Thought data. Qwen3-8B is used as the base model and is optimized using the GSPO algorithm. Training is performed exclusively on multiple-choice questions, which provide a simple and reliable reward signal through their structured answer format. During training, a binary reward is assigned, with a reward of 1 for a correct answer and 0 otherwise. This is followed by a second stage of rubric-based training, as described in Section~\ref{rubrics_diag}. 

Prior to RL, we apply a difficulty-based filtering step using the base model. Each question is evaluated eight times at a temperature of 1.0, and questions that are either too easy or too difficult are discarded. This filtering step ensures that retained training examples provide informative learning signals. Let \(\bar{s}_i\) denote the average correctness score of question \(i\) across eight runs. Furthermore, we partition the retained data based on difficulty, datapoints with $\bar{s}_i \in [0.1, 0.4]$ are used for rubric-based RL, while those with $\bar{s}_i \in (0.4, 0.7]$ are used for rule-based RL. The final training dataset composition is shown in Table~\ref{tab:dataset_filtering}.

\begin{table}[t]
\centering
\small
\caption{Dataset composition and filtered data allocation for Qwen 3 8B}
\label{tab:dataset_filtering}

\begin{tabular}{lccc}
\hline
Dataset & \shortstack{Total\\Datapoints} & \shortstack{Rule-based\\RL} & \shortstack{Rubric\\RL} \\
\hline
MedQA & 10{,}178 & 1{,}127 & 766 \\
MedBullets & 7{,}801 & 1{,}192 & 570 \\
MedMCQA & 7{,}000 & 528 & 332 \\
\hline
\end{tabular}

\end{table}

\subsubsection{Online Rubric-Guided RL}
\label{rubrics_diag}

Building on the model trained with rule-based RL (Section~\ref{train_strategy}), we further optimize it using rubric signals derived from online RL. The resulting diagnostic model is denoted as \textbf{FV-8B-Diag}. To improve medical reasoning, we employ an online RL framework in which the \textit{GPT-5 (thinking mode)} model analyzes rollout trajectories to identify and categorize errors. Specifically, three classes of errors are considered: (i) knowledge errors (e.g., hallucinations), (ii) faulty causal reasoning, and (iii) context errors. Each identified error is assigned a severity level: \textit{major} errors receive a severity score of 2, while \textit{minor} errors receive a score of 1.

Based on the aggregated severity across a rollout, we define the rubric-based reward as:
\begin{equation}
    R_{\text{rubric}} = \frac{15 - \sum \text{severity}}{15}.
\end{equation}

We establish a normalization threshold of 15 based on empirical calibration: FV-8B, evaluated on 50 random samples from MedXpertQA, achieved a mean severity score of 11.4. This rubric reward is combined with an accuracy-based reward to form the final training signal:
\begin{equation}
    R_{\text{final}} = \alpha \cdot R_{\text{rubric}} + (1 - \alpha) \cdot R_{\text{accuracy}},
\end{equation}

where $\alpha \in [0,1]$ controls the trade-off between reasoning quality and answer correctness. Based on ablations (Section~\ref{ablation_medb}), we choose $\alpha=1$. Clinical verification of the error rubrics produced by \textit{GPT-5 (thinking)} has been presented in Appendix~\ref{appendix:clinical_eval}.
\subsubsection*{Training Recipe}
Qwen3-8B is trained using the verl framework in bfloat16 precision, with a batch size of 32, and max completion tokens of $10,000$. We used a cosine scheduler (5 warmup steps) with a peak learning rate of $1e^{-6}$ utilising the Adam optimiser ($\beta_{1}=0.9$, $\beta_{2}=0.95$).

 
\subsection{Rubrics based RL Training for Clinical Healthcare Reasoning}
\label{subsec:clinic}
To improve clinical reasoning, we train on top of \textbf{FV-8B-Diag}. In addition, to evaluate the scalability of our approach across model sizes, we train two separate models from the Qwen-3 series—Qwen3-4B and Qwen3-30B-A3B \cite{yang2025qwen3technicalreport}. This allows us to assess the effectiveness of our strategy across both smaller and larger model configurations. These models are trained using the GSPO algorithm. For each rubric criterion (Section~\ref{rubrics}) $j \in \{1, 2, \ldots, M_i\}$, a llm-as-a-judge (gpt-4.1) grades whether the rubric criterion is met or not, based on the conversation, the model response, and the criterion. We compute the final score by dividing the sum of points for criteria met by the maximum possible points in that example. For criterion $j$, take $\mathbf{1}_{\{r_{ij}\}}$ to be an indicator representing whether criterion $j$ is met and $p_{ij} \in [-10, 10], p_{ij} \neq 0$ to be its assigned point value. Then, the final score $s_{i}$ for that example is described in Equation ~\ref{eqn:hb_reward}:
\begin{equation}
    \label{eqn:hb_reward}
    s_i \;=\;
\frac{\displaystyle \sum_{j=1}^{M_i} \mathbf{1}_{\{r_{ij}\}}\, p_{ij}}
     {\displaystyle \sum_{j=1}^{M_i} \max(0,\, p_{ij})}
\end{equation}

We adopt the conversation score as the reward signal for training. Through empirical experiments, we have observed that adding a system prompt which guides the llm to answer based on the definition of rubric axes yields a 3-5\% absolute improvement (and often more) on healthbench-hard. Thus we adopt the system-prompt while training and the same is provided in Appendix~\ref{appendix:hb_prompts}. To ensure meaningful feedback for reinforcement learning, we filter the
training dataset at the beginning of each epoch and retain only datapoints whose average score over eight generations lies
within the interval $[0.15, 0.7]$.



\subsubsection*{Training Recipe}
\label{training_recipe}
We trained \textbf{Qwen3-4B} for 3 epochs, \textbf{FV-8B-Diag} for 5 epochs and \textbf{Qwen3-30B} for 3 epochs, depending on the saturation shown by the respective LLMs. All the models were trained using the verl framework. The LLMs were trained in $bfloat16$ data-format, with a train-batch size of $32$, context length of $20,000$ and max completion tokens of $16,000$. We used a cosine scheduler ($5$ warmup steps) with a peak learning rate of $1e^{-6}$ utilising the Adam optimiser ($\beta_{1}=0.9$, $\beta_{2}=0.95$). Following this training procedure, we refer to the resulting models collectively as \textbf{Fathom-Vaidya (FV)} $-$ namely FV-4B, FV-8B, and FV-30B-A3B.

\section{Results}

\subsection{Diagnostic Reasoning}

\begin{table}[ht]
\centering
\caption{Accuracy (\%) of LLMs on MedxpertQA.}
\begin{tabular}{l|c}
\hline
\textbf{Model} & \textbf{MedX Score} \\
\hline
Ultramedical-3.1-8B & 17.2 \\
Qwen3-8B & 17.4 \\
Huatuo-o1-8B & 18.9 \\
Qwen3-14B & 22.7 \\
Alphamed-8B & 22.9 \\
Qwen3-30B-A3B & 23.3 \\
II-Medical-8B & 25.0 \\ 
Medgemma-27B & 25.3 \\
Baichuan-M2-32B & 27.8 \\
\rowcolor{blue!8}
\textbf{FV-8B-Diag} & \textbf{27.9} \\
\rowcolor{blue!8}
\textbf{FV-8B} & \textbf{28.3} \\
Gpt-oss-20B & 28.9 \\
Qwen3-Next-80B-A3B & 32.5 \\
Gpt-oss-120B & 36.8 \\

\hline
\end{tabular}

\label{tab:medx_scores}
\end{table}






Table \ref{tab:medx_scores} reports the performance of LLMs on MedXpertQA. As shown, FV-8B outperforms other evaluated models in the 10B parameter range, achieving a score of \textbf{28.3}. Notably, unlike prior models that rely on SFT, our results are obtained using only RL, indicating that substantial gains can be achieved by effectively leveraging the base model’s existing knowledge. Compared to the Qwen3-8B base model, FV-8B achieves an absolute improvement of 10.9\%.

Table \ref{tab:medqa_mmlu_scores} presents the evaluation on additional diagnostic reasoning benchmarks: (a) MedQA, consisting of 1,273 questions from the test set (b) Medcase Reasoning \cite{wu2025medcasereasoningevaluatinglearningdiagnostic} consisting of 897 questions from the test set (c) DiagnosisArena \cite{zhu2025diagnosisarenabenchmarkingdiagnosticreasoning} consisting of 915 questions.
FV-8B-Diag demonstrates strong gains over the base model, with an improvement of 5.6\% on MedQA and a 3.2\% increase on DiagArena, a challenging benchmark designed to evaluate diagnostic reasoning. \textbf{Further training towards clinical reasoning objectives leads to additional improvements} over FV-8B-Diag, as shown by FV-8B, which achieves consistent gains across all benchmarks. Notably, it attains a 3\% improvement on MedCase, a dataset comprising case-study-based questions that require deeper clinical reasoning.


\begin{table}[ht]
\centering
\small
\setlength{\tabcolsep}{4pt} 
\caption{Accuracy (\%) of LLMs across diagnostic reasoning benchmarks}
\label{tab:medqa_mmlu_scores}

\begin{tabular}{lccc}
\hline
\textbf{Model} & \textbf{MedQA} & \textbf{MedCase} & \textbf{DiagArena} \\
\hline
Qwen3-8B & 79.8 & 32.0 & 33.1 \\
FV-8B-Diag & 85.4 (+5.6\%) & 32.2 (+0.2\%) & 36.3 (+3.2\%) \\
FV-8B & 86.2 (+6.4\%) & 35.0 (+3.0\%) & 37.5 (+4.4\%) \\
\hline
\end{tabular}

\end{table}

\subsection{Clinical Healthcare Reasoning}

\begin{table}[t]
\centering
\caption{Accuracy(\%) of LLMs on healthbench-hard. \textbf{Bold}/\underline{Underline} denote best/second-best per benchmark}
\resizebox{\columnwidth}{!}{%
\begin{tabular}{l|c|c}
\hline
\textbf{LLM} & \textbf{Score} \textbf{(w/o} & \textbf{Score} \\
 & \textbf{sys\_prompt)} & \textbf{(w sys\_prompt)} \\
\hline
\multicolumn{3}{c}{\textbf{Closed-Source Models}} \\
\hline
o4-mini & 17.6 & 23.2 \\
o3 & 31.6 & 35.1 \\
gpt-5.2-thinking$^{*}$ & 38.58 & 43.6 \\
gpt-5-thinking$^{*}$ & \underline{39.95} & \underline{43.79} \\
gpt-5-mini-thinking & \textbf{41.15} & \textbf{44.6} \\
\hline
\multicolumn{3}{c}{\textbf{Open-Source Models}} \\
\hline
Qwen3-4B & 6.3 & 8.56 \\
Qwen3-8B & 10.2 & 13.1 \\
Medgemma-27B & 11.8 & 15.5 \\
II-Medical-8B & 13.2 & 17.95 \\
Qwen3-14B & 11.6 & 15.8 \\
Gpt-oss-20B & 13.5 & 18.77 \\
Qwen3-32B & 13.9 & 18.1 \\
Qwen3-30B-A3B & 18.7 & 23.42 \\
Qwen3-Next-80B-A3B & 26.2 & 30.49 \\
Gpt-oss-120B & 27.5 & 35.52 \\
Baichuan-M2 & \underline{34.7} & 37.92 \\
\hline
\multicolumn{3}{c}{\textbf{Ours}} \\
\hline
\rowcolor{blue!8}
FV-4B & 31.44 & 35.3 \\
\rowcolor{blue!8}
FV-8B & 30.3 & \underline{41.15} \\
\rowcolor{blue!8}
FV-30B-A3B & \textbf{37.75} & \textbf{50.1} \\
\hline
\end{tabular}}
\begin{tablenotes}
\small
\item * The model card for gpt-5-thinking and gpt-5.2-thinking show 46.2 and 42.0 respectively but evaluation using OpenAI's API using the standard script yields 38.58 and 39.95 respectively.
\end{tablenotes}
\label{tab:hb_hard_scores}
\end{table}

Table~\ref{tab:hb_hard_scores} summarizes the performance of a diverse set of open-source and closed-source models on the \textbf{HealthBench-Hard} subset under two evaluation settings: with and without a system prompt. HealthBench-Hard is specifically designed to stress-test \emph{clinical reasoning}, requiring models to handle nuanced medical contexts, avoid hallucinations, and maintain coherent causal reasoning over long conversational trajectories.

Under this stringent evaluation, FV-30B-A3B surpasses all other evaluated models by a clear margin, achieving \textbf{50.1\%} on HealthBench-Hard in the system-prompt setting and establishing a new state of the art under the prescribed protocol. Remarkably, even without a system prompt, FV-30B-A3B matches the performance of \textit{gpt-5-thinking} and exceeds significantly larger open models such as \textit{gpt-oss-120B}, as well as proprietary systems including \textit{gpt-5.2-thinking} and OpenAI’s \textit{o3}. These results underscore that the gains arise from improved reasoning capability rather than prompt engineering or scale alone.

Importantly, the proposed training methodology yields consistent improvements across the entire FV model family. We observe absolute gains of approximately 20-25\% without a system prompt and 25-27\% with a system prompt on HealthBench-Hard. 

Table~\ref{tab:ood_scores} presents the evaluation of the Fathom-Vaidya models on three additional benchmarks: (a) GPQA-Biology \cite{rein2023gpqagraduatelevelgoogleproofqa}, for which we select 71 questions categorized under the ``Biology'' domain by the dataset; and (b) Healthbench-Professional (Hb-Pro), comprising of 525 scenarios similar to how doctors/medical experts commonly use AI for their day-to-day tasks. All FV models show 12-16\% improvement on Hb-pro highlighting significant out-of-distribution generalization. FV-30B-A3B surpasses Grok-4.2 (36.1) and approaches physician-baseline (43.7) on Hb-pro. Further, we observe performance gains of 5\% on GPQA-Biology for FV-4B and FV-8B. This highlights the robustness of the approach and suggests that optimizing for principled clinical reasoning transfers effectively across diverse evaluation settings.
\begin{table}[!ht]
\caption{Accuracy (\%) on GPQA-Bio and Hb-Pro.
Parentheses indicate absolute change over the baseline.}
\label{tab:ood_scores}
\centering
\small
\setlength{\tabcolsep}{4pt}
\begin{tabular}{l|ccc}
\hline
\textbf{Model} & \textbf{GPQA-Bio} & \textbf{Hb-Pro} \\
\hline
Qwen3-4B & 56.14 & 15.9 \\
FV-4B & 62.82 (+6.68) & 28.6 (+12.7) \\
\hdashline \\
Qwen3-8B & 61.53 & 17.4 \\
FV-8B & 64.10 (+2.57) & 29.4 (+12.2) \\
\hdashline \\
Qwen3-30B & 69.20 & 23.2 \\
FV-30B & 70.52 (+1.32) & 39.8 (+16.6) \\
\hline
\end{tabular}
\end{table}

\section{Discussion}

\subsection{Rubrics Generation Capability}
We further evaluate the ability of LLMs to generate evaluation rubrics on the HealthBench-Hard dataset. Each conversation is decomposed into multiple evaluation datapoints, one per axis, such that each datapoint contains only the rubrics corresponding to a single evaluation axis. Under this construction, 1,000 conversations yield 3,455 axis-specific datapoints. For each datapoint, we sample 3 HealthBench-Easy conversations from the same thematic category as the corresponding HealthBench-Hard conversation and provide their rubrics as in-context learning examples. The model is then prompted to generate a list of rubrics for the target datapoint. Rubric quality is assessed using an LLM-as-a-judge (GPT-4.1-mini), which measures semantic similarity between the ground-truth rubrics and those generated by the model. The final score for each datapoint is computed as the weighted average over all matched rubrics.

Figure~\ref{fig:rubrics_gen} reports the relative rubric-generation performance of Qwen and Fathom-Vaidya models. Across all three model scales, Fathom-Vaidya consistently outperforms its corresponding baseline by 3–5\%, indicating that training for HealthBench-Hard completion also confers improved capability in generating semantically accurate, axis-aligned evaluation rubrics.

\begin{figure}[t]
    \centering
    \includegraphics[width=1\columnwidth]{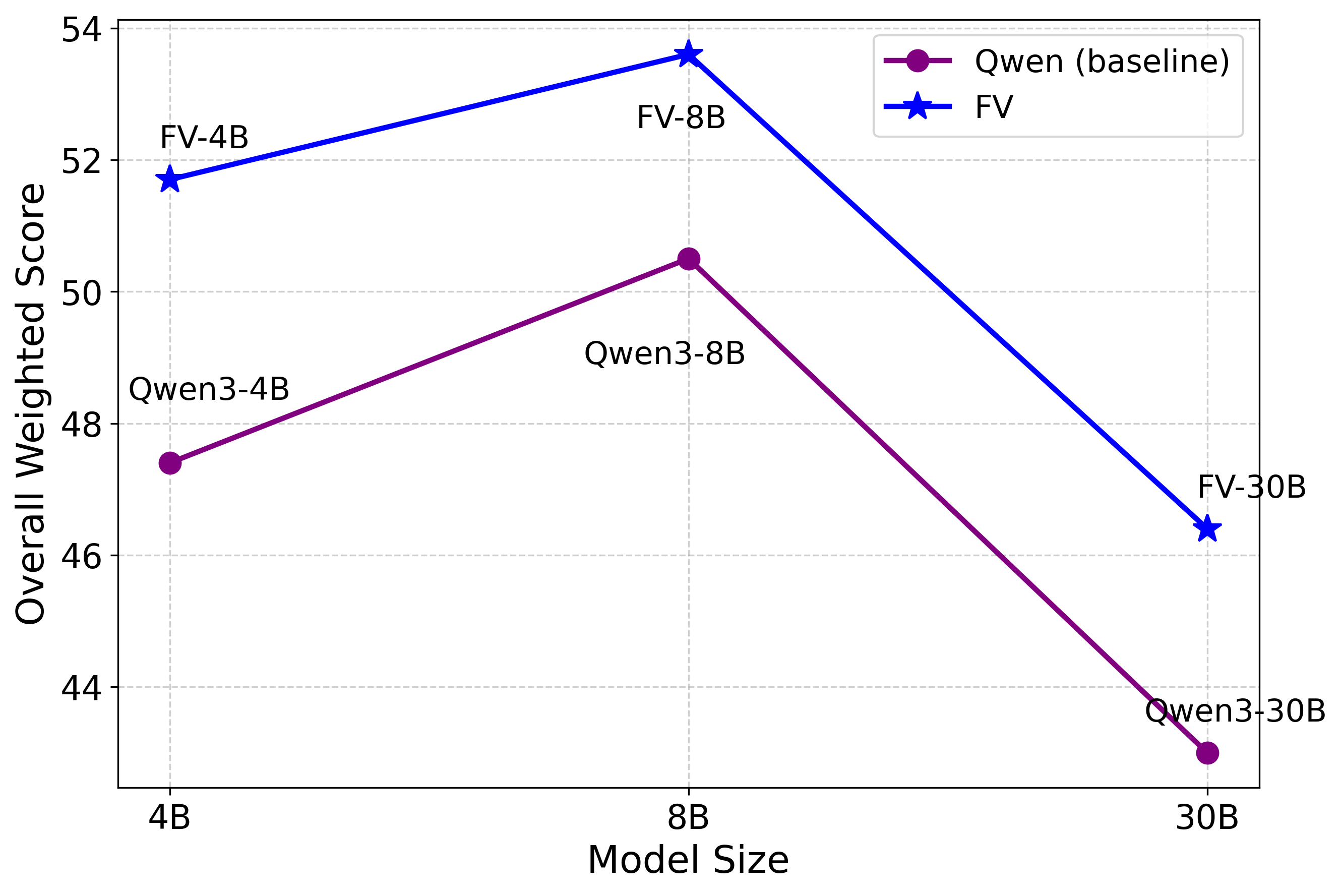}
    \caption{Rubrics generation scores of Qwen \& FV models}
    \label{fig:rubrics_gen}
\end{figure}


\subsection{Ablation on Rubrics Weightage for Diagnostic Reasoning Training}
\label{ablation_medb}

To determine the optimal value of the rubric weightage $\alpha$ described in Section~\ref{rubrics_diag}, we conducted three separate training runs with $\alpha \in \{1.0, 0.7, 0.0\}$. The model obtained from rule-based RL was used as the baseline and fine-tuned for one epoch under each configuration. Based on the observed results in Table~\ref{tab:medb_ablation}, we selected $\alpha = 1.0$ for subsequent training.

\begin{table}[t]
\centering
\setlength{\tabcolsep}{4pt} 

\caption{Ablation study on rubric weightage}
\label{tab:ablation_rubric_epochs}

\begin{tabular}{@{}l l c@{}}
\hline
\textbf{Model} & \textbf{Configuration} & \textbf{MedX score} \\
\hline
FV-8B-Ru1  & $\alpha = 1.0$ & 27.76 \\
FV-8B-Ru0.7 & $\alpha = 0.7$ & 27.31 \\
FV-8B-Ru0   & $\alpha = 0$   & 27.24 \\
\hline
\end{tabular}
\label{tab:medb_ablation}

\end{table}

\subsection{Training Dataset Ablation Study for Clinical Healthcare Reasoning}

We conduct an ablation study to assess the effectiveness of \synhb relative to training solely on the HealthBench-Easy subset. All models follow the training configuration described in Section~\ref{subsec:clinic}, with the exception that each model is trained for five epochs. Figure~\ref{fig:ablation} reports accuracy on HealthBench-Hard under the system prompt setting. Training on \synhb yields improvements exceeding 10\% for the Qwen3-8B and Qwen3-30B-A3B models.

\begin{figure}[h]
    \centering
    \includegraphics[width=1.1\columnwidth]{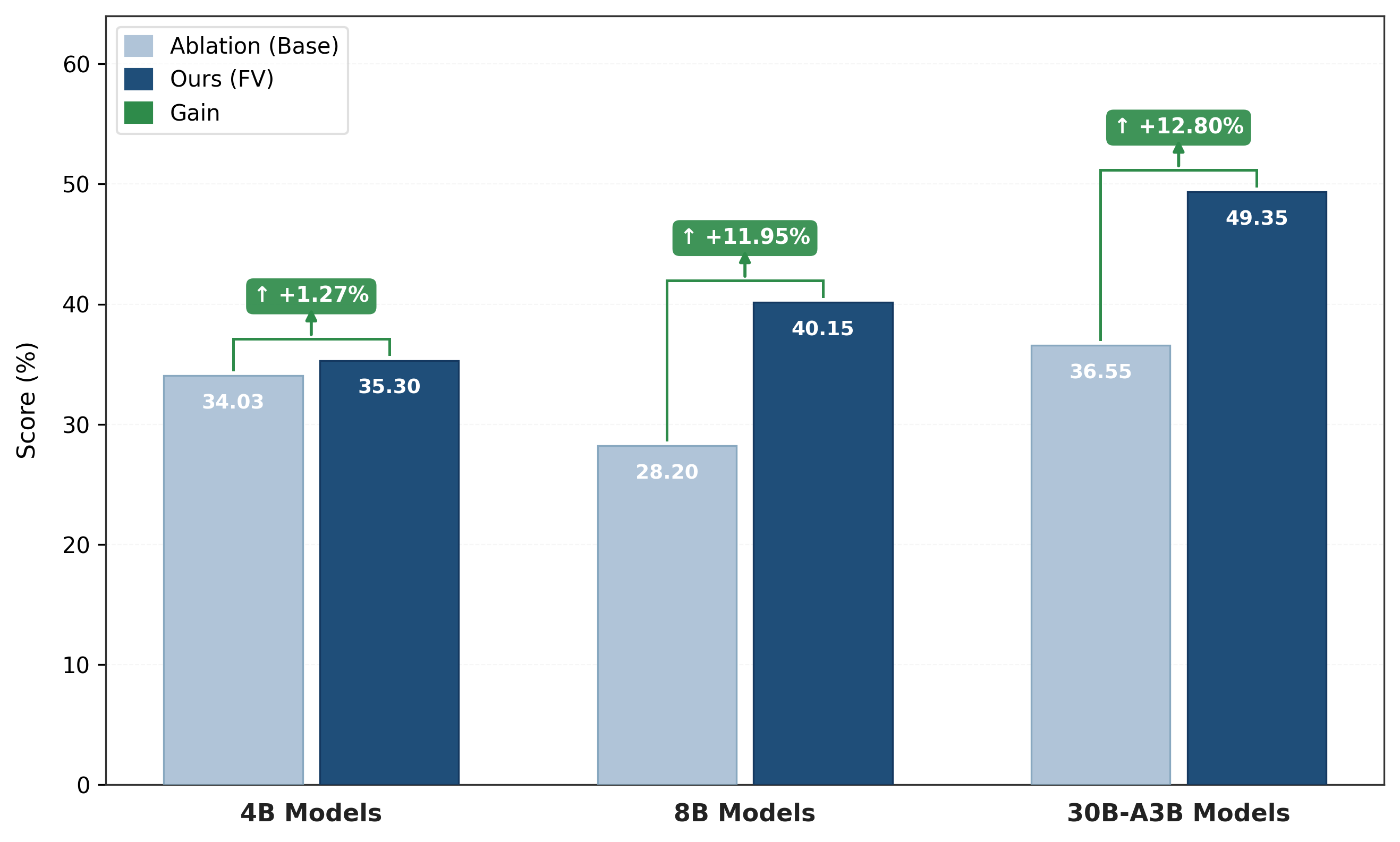}
    \caption{Ablation study comparing base models vs. our FV variants across three scales.Percentages (green) indicate absolute score gains.}
    \label{fig:ablation}
\end{figure}
\section{Conclusion}
Overall, our results demonstrate that carefully structured synthetic data \& rubric-based reinforcement learning can substantially improve both diagnostic \& clinical healthcare reasoning in LLMs. By aligning training signals with the demands of more realistic and challenging medical scenarios, our approach enables meaningful gains and  provides a step towards more reliable and adoptable medical language models. 
\section*{Limitations}
While our approach demonstrates consistent improvements across multiple benchmarks and model scales, several limitations remain. First, experiments are restricted to the Qwen3 family of models, with scales up to 30B parameters, limiting our ability to draw conclusions about the behavior of rubric-based RL at larger scales. Second, all experiments are conducted within a single model family, which shares common architectural choices, tokenizer and pretraining data. As a result, the generality of our approach across diverse LLM architectures remains to be established. Finally, our evaluation is limited to text-based medical reasoning. While benchmarks such as MedXpertQA include multimodal variants, we do not evaluate on settings involving medical imaging, reports, or real world clinical data, which may be critical for deployment in practical healthcare workflows.
\bibliography{custom}

\appendix
\onecolumn
\section{Theme \& Axes Analysis}

\begin{figure*}[!ht]
    \centering
    \begin{subfigure}{0.3\textwidth}
        \centering
        \includegraphics[width=0.9\linewidth]{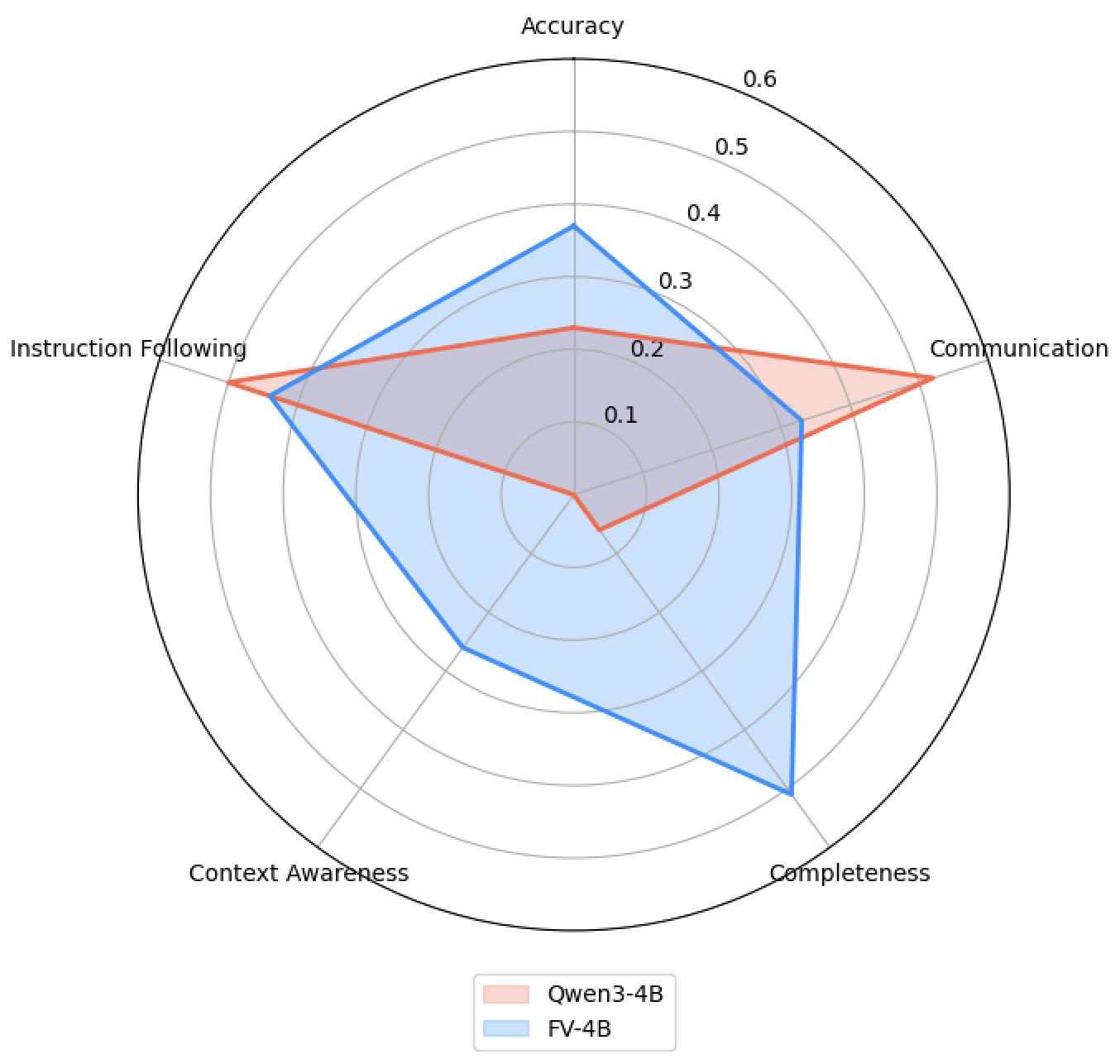}
        \caption{4B}
        \label{fig:plot1a}
    \end{subfigure}
    \hfill
    \begin{subfigure}{0.3\textwidth}
        \centering
        \includegraphics[width=0.9\linewidth]{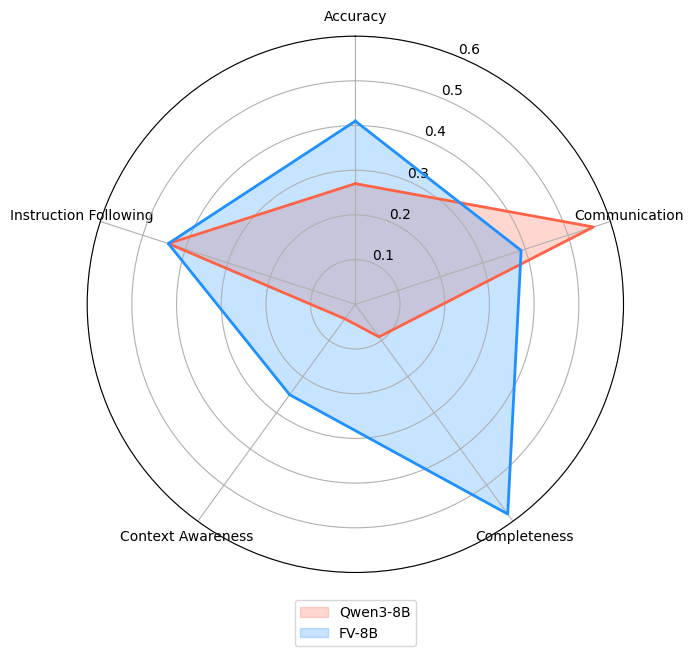}
        \caption{8B}
        \label{fig:plot2a}
    \end{subfigure}
    \hfill
    \begin{subfigure}{0.3\textwidth}
        \centering
        \includegraphics[width=0.9\linewidth]{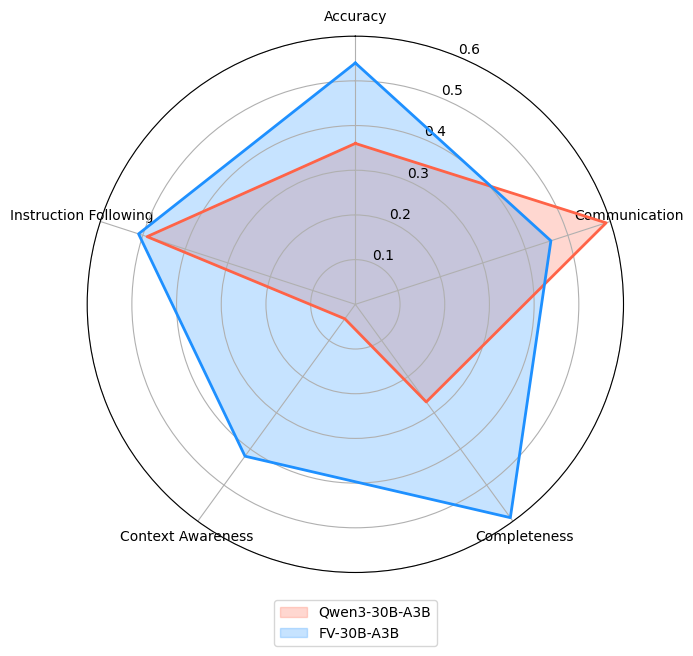}
        \caption{30B-A3B}
        \label{fig:plot3a}
    \end{subfigure}

    \caption{Analysis of Healthbench-hard scores across rubric axes for baseline and Fathom-Vaidya}
    \label{fig:axes}
\end{figure*}

\begin{figure*}[!ht]
    \centering
    \begin{subfigure}{0.3\textwidth}
        \centering
        \includegraphics[width=0.9\linewidth]{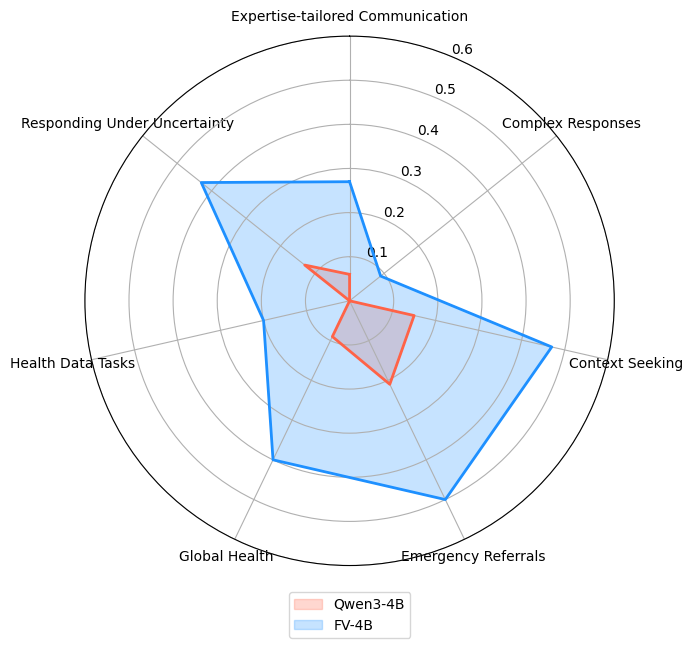}
        \caption{4B}
        \label{fig:plot1}
    \end{subfigure}
    \hfill
    \begin{subfigure}{0.3\textwidth}
        \centering
        \includegraphics[width=0.9\linewidth]{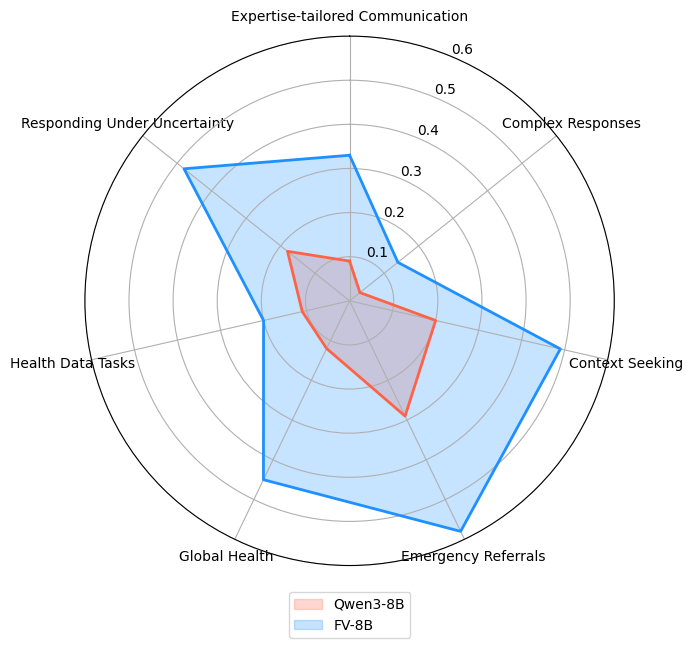}
        \caption{8B}
        \label{fig:plot2}
    \end{subfigure}
    \hfill
    \begin{subfigure}{0.3\textwidth}
        \centering
        \includegraphics[width=0.9\linewidth]{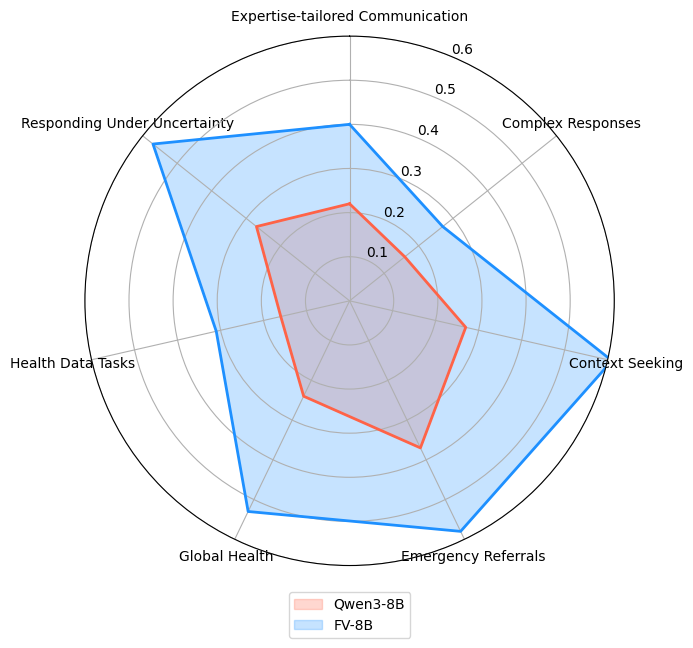}
        \caption{30B-A3B}
        \label{fig:plot3}
    \end{subfigure}

    \caption{Analysis of HealthBench-hard scores across conversation themes for baseline and Fathom-Vaidya}
    \label{fig:themes}
\end{figure*}
Figure~\ref{fig:axes} compares the performance of the baseline and Fathom-Vaidya (FV) models across five evaluation axes. Training results in substantial gains on the completeness and context awareness axes, with absolute improvements of 25–30\%. Model accuracy also improves by approximately 15\%. 

Figure~\ref{fig:themes} shows the performance of baseline and FV models across the seven different themes. While, all the seven themes show an improvement, we observe more than 20\% increase in the case of communication, while Context-seeking, Emergency referrals, Global health and Responding under uncertaininty witness ~30\% enhancement in metrics.

\section{Clinical Evaluation of \synhb and Error Rubrics used in Diagnostic Q/A}
\label{appendix:clinical_eval}
To ensure the quality of our curated synthetic dataset, we collaborated with a panel of 8 expert clinicians, each having atleast 3 years of clinical practice. A random subset of 500 samples from the \synhb dataset was selected for review. Each sample was evaluated by three clinicians, and the final score for each conversation and rubric was recorded only after consensus was reached among them. \textbf{497} conversations out of \textbf{500} (\textbf{99.4\%}) were deemed valid, and \textbf{5441} out of \textbf{5570} individual rubrics (\textbf{97.7\%}) were marked as relevant.

For the diagnostic Q/A evaluation, we randomly selected 50 questions along with their corresponding answers generated by Qwen-3-8B. All identified error rubrics for these questions were subsequently verified through clinical review following consensus among three doctors. A mean agreement of \textbf{92.2\%} was observed between human clinicians and GPT-5 (thinking), based on the adjudication of \textbf{257} error rubrics.

\begin{tcolorbox}[
    enhanced,
    colback=Beige,
    colframe=Sienna,
    title=Conversation Evaluation Instructions (Clinical Healthcare Reasoning),
    arc=4mm,
    boxrule=0.8pt,
    drop shadow=gray!40,
    fonttitle=\bfseries
]
The conversation in \texttt{conversation.txt} should be evaluated to check if it is logically coherent and talks about a single medical topic. For example, a conversation on headache suddenly steering into back-pain without any reference is logically incoherent.

\begin{center}
\begin{tabular}{cl}
\hline
\textbf{Score} & \textbf{Instruction} \\
\hline
0 & If the conversation feels logically incoherent \\
1 & If the conversation is logically coherent \\
\hline
\end{tabular}
\end{center}

Note: Please do not evaluate the conversation based on the AI responses whether it asked the right diagnostic questions or not. This is more of a coherence test.
\end{tcolorbox}

\begin{tcolorbox}[
    enhanced,
    breakable,
    colback=Beige,
    colframe=Sienna,
    title=Rubrics Evaluation Instructions (Clinical Healthcare Reasoning),
    arc=4mm,
    boxrule=0.8pt,
    shadow={0.5mm}{-0.5mm}{0mm}{gray!30},
    fonttitle=\bfseries,
    before upper={\tolerance=10000\relax},
]
Each conversation in the file \texttt{conversation.txt} ends with \texttt{USER}. It is expected that the AI model should frame a response based on that conversation to answer the last query. A sample response is attached for your reference in the file \texttt{sample\_response.txt}.

The rubrics are subjective criteria that the final response from the AI model should satisfy. Now rubrics are of two types:

\textbf{a.} Polarity ``included'' means that criterion should be included in the response. Eg in case of heart attack a rubric stating, ``Should advise to call the ambulance immediately'' with polarity ``included'' means model must advise to call the ambulance.

\textbf{b.} Polarity ``excluded'' means that criterion should be excluded in the response. Eg in case of heart attack a rubric stating, ``Does not advise to call the ambulance immediately'' with polarity ``excluded'' means model failing to advise is wrong and should be penalised. This is a relevant rubric and should be marked 1.\\

Each rubric should be carefully scored according to the following:

\begin{center}
\begin{tabular}{c p{0.75\linewidth}}
\hline
\textbf{Score} & \textbf{Instruction} \\
\hline
0 & If the rubric is not relevant to the final response and does not meaningfully contribute to the evaluation. \\
1 & If the rubric is relevant to the final response that the AI model should produce. \\
\hline
\end{tabular}
\end{center}

\noindent
Note: Please consider the polarity of the rubric carefully before marking.
\end{tcolorbox}

\begin{tcolorbox}[
    enhanced,
    breakable,
    colback=Beige,
    colframe=Sienna,
    title=Error Rubrics Evaluation Instructions (Diagnostic Reasoning),
    arc=4mm,
    boxrule=0.8pt,
    shadow={0.5mm}{-0.5mm}{0mm}{gray!30},
    fonttitle=\bfseries,
    before upper={\tolerance=10000\relax},
]
There are a total of 50 questions where each question has a dedicated folder. Each folder is named as question-{id} like question-1, question-2 and so on uptil question-50. Each folder has 2 items: 
\begin{enumerate}
\item \textit{analysis.txt}: Containing the question, the candidate answer (generated by the LLM), ground truth and the analysis by the Judge. 

\item \textit{rubrics.csv}: The doctors should annotate this file as described below 
\end{enumerate}
Each row in the evaluation sheet contains the following fields:

\begin{itemize}
    \item \texttt{quote} --- The exact quote from the candidate answer that the Judge identified as erroneous.
    \item \texttt{explanation} --- The Judge's explanation of why the quoted text is incorrect.
    \item \texttt{severity} --- The severity of the error: \texttt{1} (minor) or \texttt{2} (major).
    \item \texttt{explanation\_agreement} --- To be filled by the annotator to indicate agreement with the Judge's explanation.
    \item \texttt{severity\_agreement} --- To be filled by the annotator to indicate agreement with the Judge's severity rating.
\end{itemize}

\noindent Annotators should record their scores in the \texttt{explanation\_agreement} and \texttt{severity\_agreement} columns according to Tables~\ref{tab:eval_agreement} and~\ref{tab:severity_agreement} respectively.

\paragraph{Evaluation Agreement}

\begin{center}
\small
\begin{tabular}{cp{0.72\linewidth}}
\toprule
\textbf{Score} & \textbf{Instruction} \\
\midrule
0 & You disagree with the error pointed out by the Judge. \\
1 & You agree with the error pointed out by the Judge. \\
\bottomrule
\end{tabular}
\captionof{table}{Scoring rubric for evaluation agreement}
\label{tab:eval_agreement}
\end{center}

\paragraph{Severity Agreement}

\begin{center}
\small
\begin{tabular}{cp{0.72\linewidth}}
\toprule
\textbf{Score} & \textbf{Instruction} \\
\midrule
0 & You disagree with the error pointed out by the Judge, \emph{or} you disagree with the severity assigned to the error. \\
1 & You agree with both the error and the severity rating assigned by the Judge. \\
\bottomrule
\end{tabular}
\captionof{table}{Scoring rubric for severity agreement}
\label{tab:severity_agreement}
\end{center}

\end{tcolorbox}

\section{Justification for Negative Polarity Rubrics}
\label{appendix:neg_rubrics}
Fix a rubric $j$ with positive value $a>0$. For any conversation, let
$s_j^*$ denote the score obtained by the model excluding rubric $j$, and
$S_j^*$ denote the maximum attainable score excluding rubric $j$, where
$0 \le s_j^* \le S_j^*$ and $S_j^*>0$.
When rubric $j$ has positive polarity, its maximum contribution is $a$
and the model receives $g_j^{(+)} \in \{0,a\}$.
When rubric $j$ has negative polarity, its maximum contribution is $0$
and the model receives $g_j^{(-)} \in \{0,-a\}$.
The normalized conversation score is defined as
\[
\mathrm{Score} = \frac{s_j^* + g_j}{S_j^* + m_j},
\]
where $m_j$ denotes the maximum contribution of rubric $j$.

\begin{proposition}
Replacing a positive rubric $j$ with its negative-polarity contrapositive
never increases the normalized conversation score, and strictly decreases
it unless $s_j^* = S_j^*$.
\end{proposition}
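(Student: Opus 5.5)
The plan is a direct case analysis. The key modelling step is to make precise what ``contrapositive'' means for grading. The negative-polarity rubric is the negation of the positive one. So on any fixed model response, exactly one of them is ``met''. If the positive criterion is satisfied ($g_j^{(+)}=a$), then its negation is not met, so $g_j^{(-)}=0$. If the positive criterion is not satisfied ($g_j^{(+)}=0$), then the negation is met, so $g_j^{(-)}=-a$.

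The remaining rubrics are untouched by the replacement, so $s_j^*$ and $S_j^*$ are the same on both sides of the comparison. The maximal contributions are $m_j=a$ in the positive case and $m_j=0$ in the negative case. I therefore compare
\[
\mathrm{Score}^{(+)}=\frac{s_j^*+g_j^{(+)}}{S_j^*+a},
\qquad
\mathrm{Score}^{(-)}=\frac{s_j^*+g_j^{(-)}}{S_j^*}.
\]
Both denominators are positive, which uses $S_j^*>0$.

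\emph{Case 1 (criterion satisfied).} Here the two scores are $(s_j^*+a)/(S_j^*+a)$ and $s_j^*/S_j^*$. I cross-multiply over the common positive denominator $S_j^*(S_j^*+a)$. The difference $\mathrm{Score}^{(+)}-\mathrm{Score}^{(-)}$ then reduces to
\[
\frac{a\,(S_j^*-s_j^*)}{S_j^*(S_j^*+a)}.
\]
This is $\ge 0$ because $s_j^*\le S_j^*$, and it vanishes exactly when $s_j^*=S_j^*$.

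\emph{Case 2 (criterion not satisfied).} Here the two scores are $s_j^*/(S_j^*+a)$ and $(s_j^*-a)/S_j^*$. The same cross-multiplication gives the numerator
\[
s_j^*S_j^*-(s_j^*-a)(S_j^*+a)=a\,(S_j^*-s_j^*+a).
\]
This is strictly positive since $a>0$ and $S_j^*-s_j^*\ge 0$. So in this case the decrease is always strict, even when $s_j^*=S_j^*$.

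Combining the two cases, the replacement never increases the score. The only possible equality is in Case 1 with $s_j^*=S_j^*$, which matches the exception in the proposition. (In Case 2 the decrease is strict even then, so the ``unless'' clause is sufficient but slightly conservative.)

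The algebra is routine. The step I expect to need the most care is justifying the outcome pairing $g_j^{(-)}=g_j^{(+)}-a$. This is a modelling assumption about the judge grading a criterion and its negation consistently, and it should be stated explicitly. I would also note that $\mathrm{Score}^{(-)}$ may be negative in Case 2. This does not affect the argument, because only the positivity of the denominators is used.
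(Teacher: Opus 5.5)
Your proof is correct and follows the paper's argument. You use the same two-case split on whether the positive criterion is met, and the same cross-multiplication reducing Case 1 to $s_j^*\le S_j^*$ and Case 2 to $s_j^*\le S_j^*+a$, which, as in the paper, gives a strict decrease in Case 2. Stating explicitly that the pairing $g_j^{(-)}=g_j^{(+)}-a$ assumes the judge grades a criterion and its negation consistently is a useful addition, since the paper leaves this implicit.
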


\begin{proof}
We consider two cases.

\medskip
\noindent\textbf{Case (a): The model satisfies the positive criterion.}
In this case,
\[
g_j^{(+)} = a, \quad m_j^{(+)} = a,
\]
and the normalized score is
\[
\mathrm{Score}_{+} = \frac{s_j^* + a}{S_j^* + a}.
\]
After flipping polarity, the model avoids the negative criterion, yielding
\[
g_j^{(-)} = 0, \quad m_j^{(-)} = 0,
\]
and the normalized score becomes
\[
\mathrm{Score}_{-} = \frac{s_j^*}{S_j^*}.
\]
We have
\[
\frac{s_j^*}{S_j^*} \le \frac{s_j^* + a}{S_j^* + a}
\iff
s_j^* a \le a S_j^*
\iff
s_j^* \le S_j^*.
\]
Thus $\mathrm{Score}_{-} \le \mathrm{Score}_{+}$, with strict inequality
whenever $s_j^* < S_j^*$.

\medskip
\noindent\textbf{Case (b): The model does not satisfy the positive criterion.}
In this case,
\[
g_j^{(+)} = 0, \quad m_j^{(+)} = a,
\]
and the normalized score is
\[
\mathrm{Score}_{+} = \frac{s_j^*}{S_j^* + a}.
\]
After flipping polarity, the model violates the negative rubric and incurs
a penalty:
\[
g_j^{(-)} = -a, \quad m_j^{(-)} = 0,
\]
resulting in the normalized score
\[
\mathrm{Score}_{-} = \frac{s_j^* - a}{S_j^*}.
\]
We compare the two scores:
\begin{equation}
\begin{aligned}
\frac{s_j^* - a}{S_j^*} &\le \frac{s_j^*}{S_j^* + a} \\
&\iff (s_j^* - a)(S_j^* + a) \le s_j^* S_j^* \\
&\iff s_j^* \le S_j^* + a.
\end{aligned}
\end{equation}
Since $s_j^* \le S_j^*$ and $a>0$, the inequality always holds, implying
$\mathrm{Score}_{-} < \mathrm{Score}_{+}$.

Combining both cases, flipping a positive rubric to negative polarity
never increases the normalized score and strictly decreases it except in
the degenerate case $s_j^* = S_j^*$.
\end{proof}

\section{Prompts Store}

\subsection{System Prompt for Healthbench}
\label{appendix:hb_prompts}
\noindent
\fbox{%
\begin{minipage}{0.96\linewidth}
\small

\ttfamily
You are a world-renowned medical expert assisting a patient with a
medical issue in a conversational setting.

\vspace{0.6em}

\normalfont\textsc{\textbf{Communication}}\\
\ttfamily
Use clear, structured, and concise language. Tailor the vocabulary,
level of detail, and tone to suit the user’s level of understanding and
the medical context. Highlight important points for emphasis when
appropriate.

\vspace{0.9em}

\normalfont\textsc{\textbf{Instruction Following}}\\
\ttfamily
Follow any explicit or implicit instructions provided by the user,
including specific requests on how to format or frame your response.

\vspace{0.9em}

\normalfont\textsc{\textbf{Accuracy}}\\
\ttfamily
Ensure that your information is medically accurate, based on clinical
consensus or evidence-based guidelines. Acknowledge areas of medical
uncertainty if relevant. Your recommendations should align with the
latest guidelines.

\vspace{0.9em}

\normalfont\textsc{\textbf{Context Awareness}}\\
\ttfamily
Be mindful of the user’s role, background, and the context of the
conversation. Adapt your response accordingly, and ask clarifying
questions if the information provided is insufficient to ensure safe
and helpful guidance. Explain medical terms when necessary or avoid
them when possible.

\vspace{0.9em}

\normalfont\textsc{\textbf{Completeness}}\\
\ttfamily
Fully address the patient’s question or concern. If multiple components
or underlying issues are possible, cover all relevant aspects. Clearly
indicate symptoms that warrant urgent care, further testing, referral
to a specialist, or contacting emergency services, with examples when
appropriate.

\end{minipage}
}

\subsection{Prompts used in the creation of synthetic conversations}
\label{appendix:syn_conv_prompts}
\subsubsection{Analysis Prompt for Critic}
\fbox{%
\begin{minipage}{0.96\linewidth}
\small
\textsc{\textbf{Task Description.}}\\[0.5em]
\ttfamily
You are given a medical case scenario reported by the user. Your task is
to structurally format the case into a multi-turn patient--doctor
conversation by determining the ideal number of conversational turns.

\vspace{1.0em}

\normalfont\textsc{\textbf{Number of Turns.}}\\[0.5em]
\ttfamily
Each set of patient--doctor interaction constitutes one turn. The
following is an example of a two-turn conversation:\\[0.5em]
"user": Hi! I have fever from yesterday\\
"doctor": Okay, any other symptoms?\\
"user": No\\
"doctor": Seems like a viral infection.

\vspace{1.0em}

\normalfont\textsc{\textbf{Instructions.}}\\[0.5em]
\ttfamily
1. Each turn must represent a meaningful and progressive medical
interaction. Do not add a turn if it is not required.\\[0.5em]
2. The total number of turns must be between 1 and 5.\\[0.5em]
3. If the user input is a simple direct query, respond with a single turn
only.\\[0.5em]
4. If more than one turn is required, briefly describe what each turn
should include.

\vspace{1.0em}

\normalfont\textsc{\textbf{Case Input.}}\\[0.5em]
\ttfamily
Analyze the case below:\\[0.5em]
\{case\}

\vspace{1.0em}

\normalfont\textsc{\textbf{Output Format.}}\\[0.5em]
\ttfamily
Strictly output a JSON object in the following format:\\[0.5em]
\{\\
\ \ "num\_turns": <number of turns>,\\
\ \ "reasoning": <your reasoning>\\
\}

\end{minipage}
}

\subsubsection{Conversation Generation Prompt for Actor given case scenario}
\noindent
\fbox{%
\begin{minipage}{0.96\linewidth}
\small

\textsc{\textbf{Task Description}}\\[0.5em]
\ttfamily
You are given a medical case by a user. Your task is to split the case
into a structured conversation consisting of patient-doctor
interactions.

\vspace{1.0em}

\normalfont\textsc{\textbf{Example (Two-Turn Conversation).}}\\[0.5em]
\ttfamily
[\\
\ \ \{"role": "user", "content": "Hi! I have fever from yesterday"\},\\
\ \ \{"role": "doctor", "content": "Okay, any other symptoms?"\},\\
\ \ \{"role": "user", "content": "No, just fever"\},\\
\ \ \{"role": "doctor", "content": "Seems like viral infection"\}\\
]

\vspace{1.0em}

\normalfont\textsc{\textbf{Instructions}}\\[0.5em]
\ttfamily
1. Follow the conversation analysis and split the case into conversations
accordingly.\\[0.5em]
2. User responses should be naturalistic and consistent with the
specified personality.\\[0.5em]
3. Keep the conversation precise and informal; unnecessary details may
be omitted.\\[0.5em]
4. Strictly output a list of dictionaries, where each dictionary contains
a \texttt{"role"} and \texttt{"content"} field.\\[0.5em]
5. Strictly begin the conversation with the user reporting their
complaint.

\vspace{1.0em}

\normalfont\textsc{\textbf{Case Input}}\\[0.5em]
\ttfamily
Case: \{case\}\\[0.5em]
Analysis: \{conv\_analysis\}\\[0.5em]
Patient personality: \{fetch\_patient\_personality()\}

\end{minipage}
}
\subsubsection{Conversation Generation prompt for Actor given ICL examples}
\noindent
\fbox{%
\begin{minipage}{0.96\linewidth}
\small

\textsc{\textbf{Task Description.}}\\[0.5em]
\ttfamily
Your task is to generate a single-turn conversation between a patient
and a doctor on the topic \texttt{\{theme['tag']\}}.
\texttt{\{theme['description']\}}

\vspace{1.0em}

\normalfont\textsc{\textbf{In-Context Examples.}}\\[0.5em]
\ttfamily
\texttt{\{icl\_string\}}

\vspace{1.0em}

\normalfont\textsc{\textbf{Instructions.}}\\[0.5em]
\ttfamily
1. Generate a conversation between the user and the assistant that is
closely related to the specified topic.\\[0.5em]
2. Strictly output a list of dictionaries, where each dictionary
contains the fields \texttt{"role"} and \texttt{"content"}.\\[0.5em]
3. Strictly begin the conversation with the user reporting their
question.

\end{minipage}
}

\subsubsection{Consistency Verification Prompt For Critic}
\noindent
\fbox{%
\begin{minipage}{0.96\linewidth}
\small

\textsc{\textbf{Task Description}}\\[0.5em]
\ttfamily
You are a medical expert classifying conversation transcripts between an
end user or healthcare worker and a chatbot medical assistant. Your goal
is to determine whether the conversation transcript is realistic and
consistent.

\vspace{1.0em}

\normalfont\textsc{\textbf{Evaluation Criteria}}\\[0.5em]
\ttfamily
\textbf{Realistic:} The user queries could plausibly come from a real
layperson or healthcare worker. The tone, formality, style of speaking,
content, and overall interaction are authentic.\\[0.5em]
\textbf{Consistent:} The details within the conversation do not
conflict and are medically and logically possible (e.g., an
80-year-old woman reporting menstrual cramps would be inconsistent).

\vspace{1.0em}

\normalfont\textsc{\textbf{Grading Guidance}}\\[0.5em]
\ttfamily
Be strict when evaluating realism and consistency. If a conversation
appears potentially unrealistic, classify it as unrealistic. If it
appears potentially inconsistent, classify it as inconsistent.

\vspace{1.0em}

\normalfont\textsc{\textbf{Conversation to Classify}}\\[0.5em]
\ttfamily
\{conv\}

\vspace{1.0em}

\normalfont\textsc{\textbf{Output Format}}\\[0.5em]
\ttfamily
Select one of the following options and answer with a single letter
only:\\[0.5em]
(A) Realistic and consistent\\
(B) Unrealistic but consistent\\
(C) Realistic but inconsistent\\
(D) Unrealistic and inconsistent

\end{minipage}
}
\subsubsection{Completeness Verification prompt for Critic}
\noindent
\fbox{%
\begin{minipage}{0.96\linewidth}
\small

\textsc{\textbf{Task Description}}\\[0.5em]
\ttfamily
Grade whether the conversation below contains incomplete user or
assistant messages. An incomplete message is one in which the user or
assistant indicates that some information has been provided, but that
information is not actually included in the message.

\vspace{1.0em}

\normalfont\textsc{\textbf{Definition and Examples}}\\[0.5em]
\ttfamily
For example, a user message such as ``process this note'' is incomplete
unless the note itself is included. Similarly, a message like
``User: The patient’s fragmented data:'' is incomplete unless the
fragmented data is explicitly provided (e.g., age, vitals, symptoms,
medical history).\\[0.5em]
Likewise, an assistant message such as ``Here’s the SOAP note in the
format you requested'' is incomplete unless it also contains the actual
SOAP note. Messages containing placeholders (e.g.,
\texttt{[place draft response here]}) should also be considered
incomplete.

\vspace{1.0em}

\normalfont\textsc{\textbf{Conversation to Classify}}\\[0.5em]
\ttfamily
\{conv\}

\vspace{1.0em}

\normalfont\textsc{\textbf{Output Format}}\\[0.5em]
\ttfamily
Choose one of the following options and answer with a single letter
only:\\[0.5em]
(A) All messages are complete.\\
(B) Any messages are incomplete.

\end{minipage}
}
\newpage
\subsection{Prompts used in the creation of rubrics}
\label{appendix:rubrics_gen_prompts}
\subsubsection{Rubrics Generator Prompt FOR Actor}
\noindent
\fbox{%
\begin{minipage}{0.96\linewidth}
\small

\textsc{\textbf{Task Description}}\\[0.5em]
\ttfamily
Given a conversation between a user and the assistant, your task is to
generate appropriate rubrics to measure the final output produced by the
assistant. This conversation belongs to the topic
\texttt{\{theme['tag']\}}.
\texttt{\{theme['description']\}}
You are required to generate rubrics for the evaluation axis
\texttt{\{axis['tag']\}}.
\texttt{\{axis['description']\}}.

\vspace{1.0em}

\normalfont\textsc{\textbf{In-Context Examples}}\\[0.5em]
\ttfamily
The following example conversations and their corresponding rubrics are
provided as references.\\[0.5em]

\texttt{\#Example Conversation 1}\\
\texttt{\{example\_1\_prompt\}}\\[0.5em]
\texttt{\#Rubrics for Example Conversation 1}\\
\texttt{\{example\_1\_rubrics\}}\\[0.5em]

\texttt{\#Example Conversation 2}\\
\texttt{\{example\_2\_prompt\}}\\[0.5em]
\texttt{\#Rubrics for Example Conversation 2}\\
\texttt{\{example\_2\_rubrics\}}\\[0.5em]

\texttt{\#Example Conversation 3}\\
\texttt{\{example\_3\_prompt\}}\\[0.5em]
\texttt{\#Rubrics for Example Conversation 3}\\
\texttt{\{example\_3\_rubrics\}}

\vspace{1.0em}

\normalfont\textsc{\textbf{Instructions}}\\[0.5em]
\ttfamily
1. Use the provided examples as references when generating rubrics.\\[0.5em]
2. Each rubric must be unique and express a single specific idea; do not
combine multiple ideas into a single rubric.\\[0.5em]
3. Ensure that all rubrics are directly related to the specified
evaluation axis.\\[0.5em]
4. Generate at most \texttt{\{num\_rubrics\}} distinct rubrics and no
more.\\[0.5em]
5. Output only the rubrics as a list, following the format shown in the
examples.\\[0.5em]
6. Do not include any links in your output.

\vspace{1.0em}

\normalfont\textsc{\textbf{Conversation to Evaluate}}\\[0.5em]
\ttfamily
\{convo\}

\vspace{1.0em}

\normalfont\textsc{\textbf{Output Requirement}}\\[0.5em]
\ttfamily
Output the rubrics only. Do not include examples or additional
explanations.

\end{minipage}
}
\subsubsection{Score sanity check prompt for Critic}
\noindent
\fbox{%
\begin{minipage}{0.96\linewidth}
\small

\textsc{\textbf{Task Description}}\\[0.5em]
\ttfamily
You are a medical expert evaluating rubrics generated by an expert for a
conversation. Your goal is to determine whether the assigned rubric
scores are appropriate.

\vspace{1.0em}

\normalfont\textsc{\textbf{Rubric Definitions}}\\[0.5em]
\ttfamily
\textbf{Rubric:} Each rubric consists of a criterion and an associated
score. You must verify that each score is appropriate and lies within
the range from $-10$ to $+10$.\\[0.5em]
\textbf{Positive Rubric:} A rubric that reflects a desired behavior
and must have a positive score (e.g.,
\texttt{\{"criterion": "The response should contain the limit of sweet consumption", "Score": 5\}}).\\[0.5em]
\textbf{Negative Rubric:} A rubric that reflects an undesired behavior
and must have a negative score (e.g.,
\texttt{\{"criterion": "The response does not contain the limit of sweet consumption", "Score": -5\}}).\\[0.5em]
\textbf{Weightage:} Rubrics that are more important should have higher
absolute scores, whether positive or negative.

\vspace{1.0em}

\normalfont\textsc{\textbf{Conversation}}\\[0.5em]
\ttfamily
\{convo\}

\vspace{1.0em}

\normalfont\textsc{\textbf{Rubrics to Evaluate.}}\\[0.5em]
\ttfamily
\{syn\_rubrics\}

\vspace{1.0em}

\normalfont\textsc{\textbf{Output Format}}\\[0.5em]
\ttfamily
Select one of the following options:\\[0.5em]
(A) Appropriate\\
(B) Not appropriate. If any rubric score is inappropriate, briefly state
the reason.

\end{minipage}
}
\subsubsection{Quality check prompt for Critic}
\noindent
\fbox{%
\begin{minipage}{0.96\linewidth}
\small

\textsc{\textbf{Task Description}}\\[0.5em]
\ttfamily
You are a medical expert assessing rubrics generated by an expert. Your
goal is to determine whether the rubrics are complete and unique.

\vspace{1.0em}

\normalfont\textsc{\textbf{Rubric Definition}}\\[0.5em]
\ttfamily
Each rubric consists of a criterion and an associated score. You must
verify that every rubric criterion is complete and that all rubrics are
unique.

\vspace{1.0em}

\normalfont\textsc{\textbf{Judging Criteria}}\\[0.5em]
\ttfamily
-- \textbf{Complete:} Each rubric must form a meaningful and complete
statement. Any incomplete or vague rubric should be flagged as
incomplete (e.g., ``The response should contain'' is incomplete).\\[0.5em]
-- \textbf{Unique:} Each rubric must be distinct. Rubrics that are
repeated or semantically similar should be flagged as not unique.

\vspace{1.0em}

\normalfont\textsc{\textbf{Rubrics to Evaluate}}\\[0.5em]
\ttfamily
\{syn\_rubrics\}

\vspace{1.0em}

\normalfont\textsc{\textbf{Evaluation Guidance}}\\[0.5em]
\ttfamily
Be strict when evaluating both completeness and uniqueness.

\vspace{1.0em}

\normalfont\textsc{\textbf{Output Format}}\\[0.5em]
\ttfamily
Select one of the following options:\\[0.5em]
(A) Complete and unique\\
(B) Incomplete but unique\\
(C) Complete but not unique\\
(D) Incomplete and not unique\\[0.5em]
If you choose (B), (C), or (D), provide a brief justification.

\end{minipage}
}

\end{document}